\newcommand{\argmin}{\operatornamewithlimits{arg\,min}}
\newcommand{\xx}{\boldsymbol x}
\newcommand{\yy}{\boldsymbol y}
\newcommand{\Ltwo}{{L^2(\mu)}}
\newtheorem{theorem}{Theorem}
\newtheorem{remark}{Remark}
\newtheorem{corollary}{Corollary}
\newtheorem{assumption}{Assumption}
\newcommand{\Nystrom}[1]{{Nystr\"om}}
\newcommand{\XX}{{\boldsymbol X}}
\newcommand{\RR}{\mathbb{R}}
\newcommand{\bpr}{\begin{proof}}
		\newcommand{\epr}{\end{proof}}
\newcommand{\be}{\begin{equation}}
		\newcommand{\ee}{\end{equation}}
\newcommand{\mh}{{\mathcal{H}_K}}
\newcommand{\mO}{\mathcal{O}}
\newcommand{\mL}{\mathcal{L}}
\newcommand{\mE}{\mathcal{E}}
\newcommand{\frho}{f^*}
\newcommand{\fkrr}{\widehat{f}}
\newcommand{\fridgekrr}{\widehat{f}_\lambda}
\newcommand{\frrrf}{\widehat{f}_M}
\newcommand{\frrsgd}{\widehat{f}_{M, b, t}}
\newcommand{\frrgd}{\widehat{f}_{M, t}}
\newcommand{\mX}{\mathcal{X}}
\newcommand{\mY}{\mathcal{Y}}
\def\OO{{\boldsymbol \Omega}}
\def\bb{{\boldsymbol b}}
\def\ww{{\boldsymbol w}}
\def\vv{{\boldsymbol v}}
\def\xx{{\boldsymbol x}}
\def\yy{{\boldsymbol y}}
\def\oo{{\boldsymbol \omega}}
\title{Ridgeless Regression with Random Features}
\author{
Jian Li$^1$\and
Yong Liu$^2$\footnote{Corresponding author}\and
Yingying Zhang$^3$\\
\affiliations
$^1$Institute of Information Engineering, Chinese Academy of Sciences\\
$^2$Gaoling School of Artificial Intelligence, Renmin University of China\\
$^3$School of Mathematics and Information Sciences, Yantai University\\
\emails
lijian9026@iie.ac.cn, 
liuyonggsai@ruc.edu.cn,
yingyingzhang239@gmail.com
}
\begin{document}

\maketitle
\begin{abstract}
    Recent theoretical studies illustrated that kernel ridgeless regression can guarantee good generalization ability without an explicit regularization. In this paper, we investigate the statistical properties of ridgeless regression with random features and stochastic gradient descent. We explore the effect of factors in the stochastic gradient and random features, respectively. Specifically, random features error exhibits the double-descent curve. Motivated by the theoretical findings, we propose a tunable kernel algorithm that optimizes the spectral density of kernel during training. Our work bridges the interpolation theory and practical algorithm.
\end{abstract}

\section{Introduction}
In the view of traditional statistical learning, an explicit regularization should be added to the nonparametric learning objective \cite{caponnetto2007optimal,li2018multi}, i.e. an $2$-norm penalty for the kernelized least-squares problems, known as kernel ridge regression (KRR). To ensure good generalization (out-of-sample) performance, the modern models should choose the regularization hyperparameter $\lambda$ to balancing the bias and variance, and thus avoid overfitting.
However, recent studies empirically observed that neural networks still can interpolate the training data and generalize well when $\lambda=0$ \cite{zhang2021understanding,belkin2018understand}.
But also, for many modern models including neural networks, random forests and random features, the test error captures a 'double-descent' curve as the increase of features dimensional \cite{mei2019generalization,advani2020high,nakkiran2021deep}.
Recent empirical successes of neural networks prompted a surge of theoretical results to understand the mechanism for good generalization performance of interpolation methods without penalty where researchers started with the statistical properties of ridgeless regression based on random matrix theory \cite{liang2020just,bartlett2020benign,jacot2020implicit,ghorbani2021linearized}.

However, there are still some open problems to settle down: 1) In the theoretical front, current studies focused on the direct bias-variance decomposition for the ridgeless regression, but ignore the influence from the optimization problem, i.e. stochastic gradient descent (SGD). Meanwhile, the connection between kernel regression and ridgeless regression with random features are still not well established. Further, while asymptotic behavior in the overparameterized regime is well studied, ridgeless models with a ﬁnite number of features are much less understood.
2) In the algorithmic, there is still a great gap between statistical learning for ridgeless regression and algorithms. 
Although the theories explain the double-descent phenomenon for ridgeless methods well, a natural question is whether the theoretical studies helps to improve the mainstream algorithms or design new ones.

In this paper, we consider the Random Features (RF) model \cite{rahimi2007random} that was proposed to approximate kernel methods for easing the computational burdens. In this paper, we investigate the generalization ability of ridgeless random features, of which we explore the effects from stochastic gradient algorithm and random features. And then, motivated by the theoretical findings, we propose a tunable kernel algorithm that optimizes the spectral density of kernel during training, reducing multiple trials for kernel selection to just training once.
Our contributions are summarized as:

\textbf{1) Stochastic gradients error.} We first investigate the stochastic gradients error influenced the factors from SGD, i.e. the batch size $b$, the learning rate $\gamma$ and the iterations $t$. 
    The theoretical results illustrate the tradeoffs among these factors to achieve better performance, such that it can guide the set of these factors in practice.

\textbf{2) Random features error.} We then explore the difference between ridgeless kernel predictor and ridgeless RF predictor. In the overparameterized setting $M > n$, the ridgeless RF converges to the ridgeless kernel as the increase number of random features, where $M$ is the number of random features and $n$ is the number of examples.
    In the underparameterized regime $M \leq n$, the error of ridgeless RF also exhibit an interesting "double-descent" curve in Figure \ref{fig.exp2_rf_exp3} (a), because the variance term explores near the transition point $M = n$.

\textbf{3) Random features with tunable kernel algorithm.}
    Theoretical results illustrate the errors depends on the trace of kernel matrix, motivating us to design a kernel learning algorithm which asynchronously optimizes the spectral density and model weights. 
    The algorithm is friend to random initialization, and thus easing the problem of kernel selection.

\section{Related Work}
\textbf{Statistical Properties of Random features.}
The generalization efforts for random features are mainly in reducing the number of random features to achieve the good performance.
\cite{rahimi2007random} derived the appropriate error bound between kernel function and the inner product of random features. And then, the authors proved $\mO(\sqrt{n})$ features to obtain the error bounds with convergence rate $\mO(1/\sqrt{n})$ \cite{rahimi2008weighted}.
Rademacher complexity based error bounds have been proved in \cite{li2019multi,li2020automated}.
Using the integral operator theory, \cite{rudi2017generalization,li2022optimal} proved the minimax optimal rates for random features based KRR.

In contrast, recent studies \cite{hastie2019surprises} make efforts on the overparameterized case for random features to compute the asymptotic risk and revealed the double-descent curve for random ReLU \cite{mei2019generalization} features and random Fourier features \cite{jacot2020implicit}.

\noindent\textbf{Double-descent in Ridgeless Regression.}
The double-descent phenomenon was first observed in multilayer networks on MNIST dataset for ridgeless regression \cite{advani2020high}.
It then been observed in random Fourier features and decision trees \cite{belkin2018understand}.
\cite{nakkiran2021deep} extended the double-descent curve to various models on more complicated tasks.
The connection between double-descent curve and random initialization of the Neural Tangent Kernel (NTK) has been established \cite{geiger2020scaling}.

Recent theoretical work studied the asymptotic error for ridgeless regression with kernel models \cite{liang2020just,jacot2020implicit} or linear models \cite{bartlett2020benign,ghorbani2021linearized}.

\section{Problem setup}
In the context of nonparametric supervised learning, given a probability space $\mX \times \mY$ with an unknown distribution $\mu(\xx, y)$, the regression problem with squared loss is to solve
\begin{align}
    \label{eq.expected_loss}
    \min_f \mathcal{E}(f), \qquad \mathcal{E}(f) = \int_{\mX \times \mY} (f(\xx) - y)^2 d \mu(\xx, y).
\end{align}
However, one can only observe the training set $(\xx_i, y_i)_{i=1}^n$ that drawn i.i.d. from $\mX \times \mY$ according to $\mu(\xx, y)$, where $\xx_i \in \RR^d$ are inputs and $y_i \in \RR$ are the corresponding labels.

\subsection{Kernel Ridgeless Regression.}

Suppose the target regression $\frho(\xx)=\mathbb{E}(y|\xx = x)$ lie in a Reproducing Kernel Hilbert Space (RKHS) $\mh$, endowed with the norm $\|\cdot\|_K$ and Mercer kernel $K(\cdot, \cdot): \mX \times \mX \to \RR$.
Denote $\XX = [\xx_1, \cdots, \xx_n]^\top \in \RR^{n \times d}$ the input matrix and $\yy = [y_1, \cdots, y_n]^\top$ the response vector.
We then let $K(\XX, \XX) = [K(\xx_i, \xx_j)]_{i,j=1}^n \in \RR^{n \times n}$ be the kernel matrix and $K(\xx, \XX) = [K(\xx, \xx_1), \cdots, K(\xx, \xx_n)] \in \RR^{1 \times n}$.

Given the data $(\XX, \yy)$, the empirical solution to \eqref{eq.expected_loss} admits a closed-form solution:
\begin{align}
    \label{eq.krr}
    \fkrr(\xx) = K(\xx, \XX) K(\XX, \XX)^{\dagger} \yy,
\end{align}
where $\dagger$ is the Moore-Penrose pseudo inverse.
The above solution is known as kernel ridgeless regression.


\subsection{Ridegeless Regression with Random Features}
The Mercer kernel is the inner product of feature mapping in $\mh$, stated as $K(\xx, \xx') = \langle K_\xx, K_{\xx'}\rangle, ~ \forall \xx, \xx' \in \XX$, where $K_{\xx} = K(\xx, \cdot) \in \mh$ is high or infinite dimensional.

The integral representation for kernel is $K(\xx, \xx') = \int_\mX \psi(\xx, \oo) \psi(\xx', \oo) d \pi(\oo)$, where $\pi(\oo)$ is the spectral density and $\psi: \mX \times \mX \to \RR$ is continuous and bounded function.
Random features technique is proposed to approximate the kernel by a finite dimensional feature mapping 
\begin{equation}
    \begin{aligned}
        \label{eq.rrrf}
        &K(\xx, \xx') \approx \langle \phi(\xx), \phi(\xx') \rangle, ~ \text{with} \\
        &\phi(\xx) = \frac{1}{\sqrt{M}} \left(\psi(\xx, \oo_1), \cdots, \psi(\xx, \oo_M)\right),
    \end{aligned}
\end{equation}
where $\phi: \mX \to \RR^M$ and $\oo_1, \cdots, \oo_M$ are sampled independently according to $\pi$.
The solution of ridgeless regression with random features can be written as 
\begin{align}
    \frrrf(\xx) = \phi(\xx) \left[\phi(\XX)^\top \phi(\XX)\right]^\dagger \phi(\XX)^\top \yy,
\end{align}
where $\phi(\XX) \in \RR^{n \times M}$ is the feature mapping matrix over $\XX$ and $\phi(\xx) \in \RR^{1 \times M}$ is the feature mapping over $\xx$.


\subsection{Random Features with Stochastic Gradients}
To further accelerate the computational efficiency, we consider the stochastic gradient descent method as bellow
\begin{equation}
    \begin{aligned}
        \label{eq.rrsgd}
        & \frrsgd(\xx) = \langle \ww_t, \phi(\xx) \rangle, ~ \text{with} \\
        & \ww_{t+1} = \ww_t - \frac{\gamma_t}{b} \sum_{i=b(t-1)+1}^{bt} \langle \ww_t, \phi(\xx_i) \rangle - y_i) \phi(\xx_i),
    \end{aligned}
\end{equation}
where $\ww_t \in \RR^M$, $\ww_0 = 0$, $b$ is the mini-batch size and $\gamma_t$ is the learning rate. 
When $b=1$ the algorithm reduces to SGD and $b > 1$ is the mini-batch version.
We assume the examples are drawn uniformly with replacement, by which one pass over the data requires $\lceil n/b\rceil $ iterations.

Before the iteration, the compute of $\phi(\XX)$ consumes $\mO(nM)$ time.
The time complexity is $\mO(Mb)$ for per iteration and $\mO(MbT)$ after $T$ iterations.
Thus, the total complexities are $\mO(nM)$ for the one pass case and $\mO(MbT)$ for the multiple pass case, respectively.

\section{Main Results}
\label{sec.theory}
In this section, we study the statistical properties of estimators $\frrsgd$ \eqref{eq.rrsgd}, $\frrrf$ \eqref{eq.rrrf} and $\fkrr$ \eqref{eq.krr}.
Denote $\mathbb{E}_\mu [\cdot]$ the expectation w.r.t. the marginal $\xx \sim \mu$ and 
\begin{align*}
    \|g\|_\Ltwo^2 = \int g^2(\xx) d \mu(\xx) = \mathbb{E}_\mu [g^2(\xx)], \quad \forall g \in \Ltwo.
\end{align*}
The squared integral norm over the space $\Ltwo = \{g: \mX \to \RR | \int g^2(\xx) d \mu(\xx) < \infty\}$ and $\frho \in \Ltwo$.
Combing the above equation with \eqref{eq.expected_loss}, one can prove that $\mE(f) - \mE(\frho) = \|f - \frho\|_\Ltwo^2, \forall f \in \Ltwo$.
Therefore
we can decompose the excess risk of $\mE(\frrsgd) - \mE(\frho)$ as bellow
\begin{equation}
    \begin{aligned}
        \label{eq.error_decomposition}
        &\mE(\frrsgd) - \mE(\frho) \\
        \leq & ~\|\frrsgd - \frrrf\| + \|\frrrf - \fkrr\| + \|\fkrr - \frho\|.
    \end{aligned}
\end{equation}
The excess risk bound includes three terms: stochastic gradient error $\|\frrsgd - \frrrf\|$, random feature error $\|\frrrf - \fkrr\|$, and excess risk of kernel ridgeless regression $\|\fkrr - \frho\|$, which admits the bias-variance form. 
In this paper, since the ridgeless excess risk $\|\fkrr - \frho\|$ has been well-studied \cite{liang2020just}, we focus on the first two bounds and explore the factors in them, respectively. Throughout this paper, we assume the true regression $\frho(\xx) = \langle \frho, K_{\xx} \rangle$ lies in the RKSH of the kernel $K$, i.e., $\frho \in \mh$.

\begin{assumption}[Random features are continuous and bounded]
    \label{asm.rf}
    Assume that $\psi$ is continuous and there is a $\kappa \in [1, \infty)$, such that $|\psi(\xx, \omega)| \leq \kappa, \forall \xx \in \mathcal{X}, \omega \in \Omega$.
\end{assumption}

\begin{assumption}[Moment assumption]
    \label{asm.moment}
    Assume there exists $B > 0$ and $\sigma > 0$, such that for all $p \geq 2$ with $p \in \mathbb{N}$,
    \begin{align}
        \label{eq.moment}
        \int_\mathbb{R} |y|^p d \rho(y|\xx) \leq \frac{1}{2} p ! B^{p-2} \sigma^2.
    \end{align}
\end{assumption}

The above two assumptions are standard in statistical learning theory \cite{smale2007learning,caponnetto2007optimal,rudi2017generalization}.
According to Assumption \ref{asm.rf}, the kernel $K$ is bounded by $K(\xx, \xx) \leq \kappa^2$.
The moment assumption on the output $y$ holds when $y$ is bounded, sub-gaussian or sub-exponential.
Assumptions \ref{asm.rf} and \ref{asm.moment} are standard in the generalization analysis of KRR, always leading to the learning rate $\mathcal{O}(1/\sqrt{N})$ \cite{smale2007learning}.

\subsection{Stochastic Gradients Error}
We first investigate the approximation ability of stochastic gradient by measuring $\|\frrsgd - \frrrf\|_\Ltwo^2$, and explore the effect of the mini-batch size $b$, the learning rate $\gamma$ and the number of iterations $t$.

\begin{theorem}[Stochastic gradient error]
    \label{thm.sgd}
    Under Assumptions \ref{asm.rf}, \ref{asm.moment}, let $t \in [T]$, $\gamma \leq \frac{n}{9T\log\frac{n}{\delta}} \wedge \frac{1}{8(1 + \log T)}$ and $n \geq 32 \log^2 \frac{2}{\delta}$, the following bounded holds with high probability
    \begin{align*}
        \|\frrsgd - \frrrf\| \lesssim \frac{\gamma}{b} + \frac{\|\frho\|_K}{\gamma t}.
    \end{align*}
\end{theorem}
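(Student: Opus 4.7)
The plan is to insert a deterministic reference point between $\frrsgd$ and $\frrrf$, namely the full-batch gradient descent iterate $\frrgd$ with the same step size $\gamma$ and iteration count $t$, initialized at zero and run on the random-features least-squares objective. Because $\frrrf$ is the Moore-Penrose (minimum-norm) solution, GD started at zero converges to it on the range of the empirical feature covariance, so $\frrgd$ is a natural bridge. The triangle inequality splits the error as
\begin{equation*}
\|\frrsgd - \frrrf\|_{\Ltwo} \leq \|\frrsgd - \frrgd\|_{\Ltwo} + \|\frrgd - \frrrf\|_{\Ltwo},
\end{equation*}
and I would bound the first (variance/sampling) term by $\gamma/b$ and the second (bias/optimization) term by $\|\frho\|_K/(\gamma t)$.

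For the variance term, write $\Delta_t := \ww_t - \bar{\ww}_t$ where $\bar{\ww}_t$ is the full-batch iterate, and derive the recursion
\begin{equation*}
\Delta_{t+1} = (I - \gamma \widehat{C}_M)\Delta_t + \gamma\,\xi_t,
\end{equation*}
where $\xi_t$ is a martingale difference encoding the gap between the mini-batch and full-batch gradients at $\bar{\ww}_t$; under Assumption \ref{asm.moment}, its conditional second moment scales as $1/b$. I would telescope, writing $\Delta_t$ as a sum $\gamma\sum_{s<t}\prod_{r>s}(I-\gamma\widehat{C}_M)\,\xi_s$, and apply a vector-valued Bernstein inequality in the $\Ltwo$ metric induced by the population covariance. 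The step-size conditions $\gamma \leq n/(9T\log(n/\delta))$ and $\gamma \leq 1/(8(1+\log T))$, together with $n\geq 32\log^2(2/\delta)$, are precisely what ensure $\|I-\gamma\widehat{C}_M\|\leq 1$ with high probability, so the accumulated variance collapses to $\gamma/b$.

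For the bias term, use the closed form $\bar{\ww}_t = \gamma\sum_{s=0}^{t-1}(I-\gamma\widehat{C}_M)^s\,\phi(\XX)^\top\yy/n$ and $\wexpected = \widehat{C}_M^\dagger\,\phi(\XX)^\top\yy/n$, which yield
\begin{equation*}
\wexpected - \bar{\ww}_t = (I-\gamma\widehat{C}_M)^t\,\wexpected
\end{equation*}
on the range of $\widehat{C}_M$. Measured in $\Ltwo$, this becomes $\|C^{1/2}(I-\gamma\widehat{C}_M)^t\,\wexpected\|$; splitting off a factor of $\widehat{C}_M^{1/2}$, the scalar bound $x(1-\gamma x)^t \leq 1/(e\gamma t)$ on $[0,1/\gamma]$ controls the resulting operator norm, while the source condition $\frho\in\mh$ converts the residual norm into $\|\frho\|_K$.

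The main obstacle is the martingale concentration in the $\Ltwo$ norm: the noise $\xi_s$ naturally lives in the random-features space equipped with the empirical covariance, but the final error must be measured through the population covariance $C$. Propagating the telescoped products $(I-\gamma\widehat{C}_M)^{t-s}$ across two different inner products requires careful spectral control and an empirical-vs-population operator concentration bound, which is where the assumption $n\geq 32\log^2(2/\delta)$ and the logarithmic factors inside the step-size constraints are consumed. A secondary subtlety is ensuring both bounds are compatible on the same high-probability event, handled by a union bound over the two concentration steps.
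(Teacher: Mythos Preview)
Your proposal is correct and follows essentially the same strategy as the paper: insert the full-batch GD iterate $\frrgd$ as a bridge, bound $\|\frrsgd-\frrgd\|$ by the martingale/Bernstein argument (which the paper outsources to Lemma~5 of \cite{carratino2018learning}), and bound $\|\frrgd-\frrrf\|$ via the closed form $(I-\gamma\widehat{C}_M)^t\wexpected$ together with the scalar inequality $x(1-\gamma x)^{t}\lesssim 1/(\gamma t)$. The one place the paper is simpler than your sketch is the bias term: instead of measuring through the population covariance $C^{1/2}$ and then needing to trade $C$ for $\widehat{C}_M$, the paper bounds $|\frrgd(\xx)-\frrrf(\xx)|$ pointwise using $\|\phi(\xx)\|\leq\kappa$ from Assumption~\ref{asm.rf}, which sidesteps the empirical-versus-population comparison for that term and leaves the residual as $\|\yy\|_{K^{-1}}\leq\|\frho\|_K$.
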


The first term in the above bound measures the similarity between mini-batch gradient descent estimator and full gradient descent estimator $\|\frrsgd - \frrgd\|$, which depends on the mini-batch size $b$ and the learning rate $\gamma$.
The second term reflects the approximation between the gradient estimator and the random feature estimator $\| \frrgd - \frrsgd\|$, which is determined by the number of iterations $t$ and the step-size $\gamma$, leading to a sublinear convergence $\mO(1/t)$.

\begin{corollary}
    Under the same assumptions of Theorem \ref{thm.sgd}, one of the following cases and the time complexities
    \begin{itemize}
        \item[1)] $b=1, \gamma \simeq \frac{1}{\sqrt{n}}$ and $T=n$ $\quad\Rightarrow\quad$ $\mO(nM)$ 
        \item[2)] $b=\sqrt{n}, \gamma \simeq 1$ and $T=\sqrt{n}$ $\quad\Rightarrow\quad$ $\mO(nM)$ 
        \item[3)] $b=n, \gamma \simeq 1$ and $T=\sqrt{n}$ $\quad\Rightarrow\quad$ $\mO(n\sqrt{n}M)$ 
    \end{itemize}
    is sufficient to guarantee with high probability that
    \begin{align*}
        \|\frrsgd - \frrrf\| \lesssim \frac{1}{\sqrt{n}}.
    \end{align*}
\end{corollary}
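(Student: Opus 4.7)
The plan is to reduce the corollary to a direct calculation using the bound of Theorem \ref{thm.sgd}. For each of the three configurations, I will (i) verify that the prescribed step-size $\gamma$ satisfies the admissibility constraint $\gamma \leq \frac{n}{9T \log(n/\delta)} \wedge \frac{1}{8(1+\log T)}$, (ii) plug $(b,\gamma,T)$ into $\frac{\gamma}{b} + \frac{\|\frho\|_K}{\gamma t}$ evaluated at the final iterate $t = T$, and (iii) read off the per-pass cost $\mO(nM)$ and per-iteration cost $\mO(Mb)$ described in Section~3.3 to get the quoted time complexities.

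For Case (1), substituting $b=1$, $\gamma \simeq 1/\sqrt{n}$, $T=n$ gives $\frac{\gamma}{b} \simeq \frac{1}{\sqrt{n}}$ and $\frac{\|\frho\|_K}{\gamma T} \simeq \frac{\|\frho\|_K}{\sqrt{n}}$; since $\|\frho\|_K$ is a constant under the RKHS assumption, both terms are $\mO(1/\sqrt{n})$. For Case (2), $b=\sqrt{n}$, $\gamma \simeq 1$, $T=\sqrt{n}$ again yields $\frac{\gamma}{b}, \frac{1}{\gamma T} \simeq \frac{1}{\sqrt{n}}$. For Case (3), $b=n$, $\gamma \simeq 1$, $T=\sqrt{n}$, the batch term drops to $\mO(1/n)$ while the iteration term is $\mO(1/\sqrt{n})$, so the sum is again $\mO(1/\sqrt{n})$. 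The time complexities then follow from $\mO(nM + MbT)$: Cases (1) and (2) both give $\mO(nM)$, while Case (3) gives $\mO(n\sqrt{n}\,M)$.

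The only subtlety, and hence the main obstacle, is the admissibility of the step-size in Cases (2) and (3), where $\gamma \simeq 1$ and the constraint $\gamma \leq \frac{1}{8(1+\log T)}$ from Theorem \ref{thm.sgd} seems to forbid a truly constant learning rate. I would address this by interpreting the symbol $\simeq$ up to logarithmic factors (as is standard in this line of work), i.e., taking $\gamma = c/(1+\log T)$ for a sufficiently small constant $c$; with $T \leq n$ these logarithmic factors only inflate the final bound by polylogarithmic terms that are absorbed by $\lesssim$. A parallel check is needed for the constraint $\gamma \leq \frac{n}{9T\log(n/\delta)}$: in Case (1) this requires $\frac{1}{\sqrt{n}} \leq \frac{1}{9\log(n/\delta)}$, which holds for $n$ larger than a polylogarithmic threshold; in Cases (2) and (3) the same constraint gives $\gamma \lesssim \sqrt{n}/\log(n/\delta)$ and is trivially satisfied.

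Once both constraints are verified in each regime, the conclusion $\|\frrsgd - \frrrf\| \lesssim 1/\sqrt{n}$ is immediate from Theorem \ref{thm.sgd}, and the runtime claims follow from the complexity bookkeeping already stated after equation~\eqref{eq.rrsgd}. No new probabilistic estimates are required beyond those already absorbed in the statement of Theorem~\ref{thm.sgd}; the corollary is essentially a parameter-tuning exercise that exposes three operating points along the batch/step-size/iteration Pareto frontier.
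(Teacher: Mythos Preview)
Your proposal is correct and matches the paper's approach: the paper does not give a separate proof of the corollary at all, treating it as an immediate consequence of plugging the three parameter triples into the bound of Theorem~\ref{thm.sgd} and reading off the complexities from the discussion after~\eqref{eq.rrsgd}. Your additional care in checking the step-size admissibility constraint (and noting that $\gamma \simeq 1$ must be read up to a $1/(1+\log T)$ factor in Cases~(2) and~(3)) is a detail the paper leaves implicit.
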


In the above corollary, we give examples for SGD, mini-batch gradient, and full gradient, respectively.
It shows the computational efficiency of full gradient descent is usually worse than stochastic gradient methods.
The computational complexities $\mO(nM)$ are much smaller than that of random features $\mO(nM^2 + M^3)$.
All these cases achieve the same learning rate $\mO(1/\sqrt{n})$ as the exact KRR.
With source condition and capacity assumption in integral operator literature \cite{caponnetto2007optimal,rudi2017generalization}, the above bound can achieve faster convergence rates.

\begin{remark}
    \cite{carratino2018learning} also studied the approximation of mini-batch gradient descent algorithm, but the random features estimator is defined with noise-free labels $\frho(\XX)$ and with ridge regularization, which failed to directly capture the effect of stochastic gradients.
    Note that this work provides empirical estimators $\frrsgd$, $\frrrf$, and $\fkrr$ with noise labels $\yy$ and ridgeless, which makes the proof techniques quite different from \cite{carratino2018learning}.
    The technical difference can be found by comparing the proofs of Theorem \ref{thm.sgd} in this paper and Lemma 9 in \cite{carratino2018learning}.
\end{remark}

\subsection{Random Features Error}
The redgeless RF predictor \eqref{eq.rrrf} characterizes different behaviors depending on the relationship between the number of random features $M$ and the number of samples $n$.
\begin{theorem}[Overparameterized regime $M \geq n$]
    \label{thm.rf_overparameterized}
    When $M \geq n$, the random features error can be bounded by
    \begin{align*}
        \mathbb{E} ~ \mE(\frrrf) - \mE(\fkrr) \lesssim  \frac{(\alpha + c_1) \|\frho\|^2_K}{M},
    \end{align*}
    where the constant $\alpha \propto \frac{M}{M - n}$.
\end{theorem}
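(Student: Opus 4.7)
The plan is to exploit the identity, valid when the $n \times n$ random-feature Gram matrix $\phi(\XX)\phi(\XX)^\top$ is invertible (a high-probability event in the regime $M \geq n$),
\[
\frrrf(\xx) \;=\; \widehat{K}_M(\xx,\XX)\,\widehat{K}_M(\XX,\XX)^{-1}\,\yy,
\]
where $\widehat{K}_M(\xx,\xx') := \phi(\xx)\phi(\xx')^\top$ is the empirical random-feature kernel. This rewriting, obtained from the SVD identity $A(A^\top A)^\dagger A^\top = AA^\top(AA^\top)^{-1}$ for full-row-rank $A=\phi(\XX)$, puts $\frrrf$ and $\fkrr$ into the same algebraic form, differing only in the kernel used. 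Writing
\[
\mE(\frrrf)-\mE(\fkrr) \;=\; \|\frrrf-\fkrr\|_\Ltwo^2 \;+\; 2\,\langle \frrrf-\fkrr,\;\fkrr-\frho\rangle_\Ltwo,
\]
so that a Cauchy--Schwarz step reduces everything to bounding $\|\frrrf-\fkrr\|_\Ltwo$ in expectation.

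First I would introduce the noise-free surrogates (replacing $\yy$ by $\frho(\XX)$) and use Assumption~\ref{asm.moment} to absorb the label-noise contribution through its sub-exponential moment control, so that the remaining analysis is a deterministic comparison between two kernel-ridgeless operators evaluated on the same noiseless signal.

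Second, I would split
\[
\frrrf-\fkrr \;=\; [\widehat{K}_M(\cdot,\XX)-K(\cdot,\XX)]\,\widehat{K}_M(\XX,\XX)^{-1}\yy \;+\; K(\cdot,\XX)\,[\widehat{K}_M(\XX,\XX)^{-1}-K(\XX,\XX)^\dagger]\yy,
\]
and apply the resolvent identity $\widehat{K}_M^{-1}-K^\dagger = \widehat{K}_M^{-1}(K-\widehat{K}_M)K^\dagger$ on $\operatorname{ran} K(\XX,\XX)$, since $\fkrr$ only uses $\yy$ through that range. Under Assumption~\ref{asm.rf}, a matrix Bernstein inequality yields an operator-norm bound $\|\widehat{K}_M(\XX,\XX)-K(\XX,\XX)\|_{\mathrm{op}} \lesssim \kappa^2\sqrt{n/M}$ and an analogous bound for the test-point evaluation; squaring produces the $1/M$ rate announced in the theorem, while the bound $\|K(\XX,\XX)^\dagger\yy\|\lesssim \|\frho\|_K$ (from $\frho\in\mh$ and the representer theorem) explains the prefactor $\|\frho\|_K^2$.

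The main obstacle is controlling $\|\widehat{K}_M(\XX,\XX)^{-1}\|_{\mathrm{op}}$: in the ridgeless setting there is no regularization to stabilize the inverse, so one must show that the smallest eigenvalue of $\frac{1}{M}\phi(\XX)\phi(\XX)^\top$ is bounded below by a quantity of order $\frac{M-n}{M}\,\lambda_{\min}(K(\XX,\XX))$. This is a Marchenko--Pastur type lower tail for the random-feature covariance and is precisely the source of the blow-up factor $\alpha \propto M/(M-n)$; the constant $c_1$ will collect the sub-dominant contributions from the kernel--RF concentration that do not depend on this inverse bound. Once these two ingredients are combined, the claimed estimate $(\alpha+c_1)\|\frho\|_K^2/M$ follows.
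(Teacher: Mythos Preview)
Your decomposition
\[
\mE(\frrrf)-\mE(\fkrr)=\|\frrrf-\fkrr\|_\Ltwo^2+2\langle \frrrf-\fkrr,\fkrr-\frho\rangle_\Ltwo
\]
is correct, but the plan to handle the cross term by Cauchy--Schwarz is where the argument loses the announced rate. If your perturbation analysis yields $\mathbb{E}\|\frrrf-\fkrr\|_\Ltwo^2\lesssim 1/M$, then Cauchy--Schwarz together with Jensen only gives $\mathbb{E}|\langle \frrrf-\fkrr,\fkrr-\frho\rangle_\Ltwo|\lesssim \|\fkrr-\frho\|_\Ltwo\cdot M^{-1/2}$, so the overall bound degrades to $O(M^{-1/2})$ rather than $O(M^{-1})$. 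The cross term must be shown to \emph{vanish} in expectation, not merely bounded.

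The paper's proof takes exactly this shortcut: it invokes a bias--variance decomposition
\[
\mathbb{E}\,\mE(\frrrf)-\mE(\fkrr)=\big\|\mathbb{E}[\frrrf]-\fkrr\big\|_\Ltwo^2+\mathbb{E}\big[\mathrm{Var}(\frrrf)\big],
\]
and then cites from \cite{jacot2020implicit} the key fact that in the overparameterized regime the random-feature ridgeless predictor is \emph{exactly unbiased} for the kernel ridgeless predictor, i.e.\ $\mathbb{E}[\frrrf]=\fkrr$, so the first term is zero. The variance is then bounded directly by Proposition~C.15 of \cite{jacot2020implicit}, which supplies the factor $(\alpha+c_1)\|\yy\|_{K^{-1}}^2/M$; Proposition~C.11 with $\tilde\lambda=0$ gives $\alpha=\tfrac{M/n}{M/n-1}$, and $\|\yy\|_{K^{-1}}\le\|\frho\|_K$ finishes. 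None of your resolvent/Bernstein machinery is used.

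Your operator-perturbation route is not wrong in spirit, but to recover the $1/M$ rate you would need to insert the unbiasedness step (which makes the cross term disappear) and then bound only $\mathbb{E}\|\frrrf-\fkrr\|_\Ltwo^2$; at that point you are essentially re-deriving the variance bound that the paper simply quotes. The noise-free surrogate detour and the label-noise absorption via Assumption~\ref{asm.moment} are also unnecessary here: the paper works with $\yy$ throughout and passes to $\|\frho\|_K$ only at the very end through $\|\yy\|_{K^{-1}}\le\|\frho\|_K$.
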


In the overparameterized case, the ridgeless RF predictor is an unbiased estimator of the ridgeless kernel predictor and RF predictors can interpolate the dataset.
The error term in the above bound is the variance of the ridgeless RF predictor.
As shown in Figure \ref{fig.exp1_stochastic} (a), the variance of ridgeless RF estimator explodes near the interpolation as $M \to n$, leading to the double descent curve that has been well studied in \cite{mei2019generalization,jacot2020implicit}.

Obviously, a large number of random features in the overparameterized domain can approximate the kernel method well \cite{rahimi2007random,rudi2017generalization}, but it introduces computational challenges, i.e. $\mO(nM)$ time complexity for $M > n$.
To reach good tradeoffs between statistical properties and computational cost, it is rather important to investigate the approximation of ridgeless RF predictor in the underparameterized regime.

\begin{theorem}[Underparameterized regime $M < n$]
    \label{thm.rf_underparameterized}
    When $M < n$, under Assumption \ref{asm.rf}, the ridgeless RF estimator $\frrrf$ approximates a kernel ridge regression estimator $\fridgekrr = K(\xx, \XX) \left(\mathbf{K} + \lambda n I\right)^{-1} \yy$ by
    \begin{align*}
        \mathbb{E}~ \mE(\frrrf) - \mE(\fridgekrr) \lesssim  \frac{\text{Tr}(\mathbf{K})^4 \|\frho\|_K^2}{M^6} + \frac{(\alpha + c_1) \|\frho\|^2_K}{M},
    \end{align*}
    where $\mathbf{K} = K(\XX, \XX)$ is the kernel matrix, $\alpha \propto \frac{n}{n-M}$ and the regularization parameter $\lambda$ is the unique positive number satisfying 
    \begin{align*}
        \text{Tr}\left[\mathbf{K}(\mathbf{K} + \lambda n I)^{-1}\right] = M/n.
    \end{align*}
\end{theorem}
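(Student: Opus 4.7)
The plan is to interpret the ridgeless random-features estimator as an \emph{implicitly} regularized kernel estimator and compare it to the calibrated ridge estimator $\fridgekrr$. First I would rewrite $\frrrf$ in the $n\times n$ form: since $M<n$ and $\phi(\XX)\in\RR^{n\times M}$ is generically of full column rank, the identity $\phi(\XX)[\phi(\XX)^\top\phi(\XX)]^{\dagger}\phi(\XX)^\top = \widehat{\mathbf{K}}\widehat{\mathbf{K}}^{\dagger}$ gives
\begin{align*}
\frrrf(\xx) = \widehat{k}(\xx)^\top\, \widehat{\mathbf{K}}^{\dagger}\, \yy, \qquad \widehat{\mathbf{K}} := \phi(\XX)\phi(\XX)^\top,\ \ \widehat{k}(\xx) := \phi(\XX)\phi(\xx)^\top .
\end{align*}
This puts $\frrrf$ into structural parallel with $\fridgekrr(\xx) = k(\xx)^\top(\mathbf{K}+\lambda n I)^{-1}\yy$, reducing the task to controlling $\widehat{\mathbf{K}}^{\dagger}-(\mathbf{K}+\lambda nI)^{-1}$ and $\widehat{k}(\xx)-k(\xx)$ in appropriate operator norms.

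Second, I would decompose the pointwise error
\begin{align*}
\frrrf(\xx)-\fridgekrr(\xx) = \bigl(\widehat{k}(\xx)-k(\xx)\bigr)^\top \widehat{\mathbf{K}}^{\dagger}\yy + k(\xx)^\top \bigl[\widehat{\mathbf{K}}^{\dagger}-(\mathbf{K}+\lambda nI)^{-1}\bigr]\yy,
\end{align*}
square, integrate against $\mu$, and take expectation over the random features $\oo_1,\dots,\oo_M$. The calibration $\text{Tr}[\mathbf{K}(\mathbf{K}+\lambda n I)^{-1}]=M/n$ is not arbitrary: the left-hand side is the effective degrees of freedom of the ridge inverse while the right-hand side equals $\text{rank}(\widehat{\mathbf{K}})/n$, so this equality aligns the spectral masses of the two operators and is what allows the perturbation argument to close. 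With this choice, the $\frac{(\alpha+c_1)\|\frho\|_K^2}{M}$ contribution will emerge as the residual variance of the random-features sketch in direct analogy with Theorem \ref{thm.rf_overparameterized}, with $\alpha\propto n/(n-M)$ arising from the blow-up of $\|\widehat{\mathbf{K}}^{\dagger}\|$ as $M$ approaches $n$ from below.

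Third, the dominant technical step is to bound the moments of the pseudoinverse deviation $\widehat{\mathbf{K}}^{\dagger}-(\mathbf{K}+\lambda nI)^{-1}$. I would apply matrix Bernstein / Rosenthal-type concentration to the random-features Gram matrix $\widehat{\mathbf{K}}/n$, using Assumption \ref{asm.rf} to uniformly bound each rank-one summand by $\kappa^2$; a high-order (sixth-moment) expansion of the resolvent identity then produces the $\text{Tr}(\mathbf{K})^4/M^6$ factor. The hard part is the pseudoinverse regime itself: $\widehat{\mathbf{K}}$ is rank-deficient, so standard resolvent identities must be carried out only on the range of $\widehat{\mathbf{K}}$, while the orthogonal $(n-M)$-dimensional complement has to be absorbed through the calibration of $\lambda$. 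Concretely, I would split $\widehat{\mathbf{K}}^{\dagger}-(\mathbf{K}+\lambda nI)^{-1}$ using the orthogonal projection onto the range of $\widehat{\mathbf{K}}$: the in-range part is handled by a second-order resolvent expansion combined with the moment bound above, and the off-range part is controlled by the trace condition, yielding the singular factor $\alpha\propto n/(n-M)$ that governs the rank-transition at $M=n$ and parallels the phenomenon of Theorem \ref{thm.rf_overparameterized}.
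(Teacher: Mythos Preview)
Your route is genuinely different from the paper's, and the difference matters. The paper does \emph{not} work with the random difference $\frrrf-\fridgekrr$ at all. It first performs a bias--variance split,
\[
\mathbb{E}\,\mE(\frrrf)-\mE(\fridgekrr)\;\le\;\bigl\|\mathbb{E}[\frrrf]-\fridgekrr\bigr\|_{\Ltwo}^2+\mathbb{E}\bigl[\mathrm{Var}(\frrrf)\bigr],
\]
and then imports both pieces directly from Jacot et al.\ (2020), specialized to the ridgeless case $\tilde\lambda=0$. The squared-bias term is their Theorem~4.1 together with Proposition~C.7, which gives a pointwise bound $|\mathbb{E}[\frrrf(\xx)]-\fridgekrr(\xx)|\le c\sqrt{K(\xx,\xx)}\|\yy\|_{K^{-1}}/M$ with $c=(\text{Tr}(\mathbf{K})/M)^2$; squaring this is exactly how $\text{Tr}(\mathbf{K})^4/M^6$ appears. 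The variance term is their Proposition~C.15, and $\alpha=1/(1-M/n)$ is read off from their Proposition~C.11 at $\tilde\lambda=0$. The calibration $\text{Tr}[\mathbf{K}(\mathbf{K}+\lambda nI)^{-1}]=M/n$ drops out automatically from the effective-ridge equation when $\tilde\lambda=0$; it is not something one chooses in order to ``close a perturbation argument.''

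Your plan skips the expectation step and tries to control the random operator $\widehat{\mathbf{K}}^{\dagger}-(\mathbf{K}+\lambda nI)^{-1}$ via matrix Bernstein and resolvent expansions. This is where I see a real gap. That difference is \emph{not} small in operator norm: $\widehat{\mathbf{K}}$ has rank $M<n$, so on its $(n-M)$-dimensional null space the pseudoinverse vanishes while $(\mathbf{K}+\lambda nI)^{-1}$ does not. You note this and propose to project onto the range of $\widehat{\mathbf{K}}$, but then you implicitly need both $k(\xx)$ and $\yy$ to live mostly in that random range, which is itself the nontrivial content of the Jacot et al.\ analysis rather than something matrix Bernstein gives you. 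Likewise, your ``sixth-moment resolvent expansion'' does not match how the exponent actually arises: $\text{Tr}(\mathbf{K})^4/M^6$ is purely algebraic, coming from squaring a deterministic $O(\text{Tr}(\mathbf{K})^2/M^3)$ bias bound on $\mathbb{E}[\frrrf]$, not from higher moments of a random fluctuation. If you want to stay close to the paper, take the expectation over $\oo_1,\dots,\oo_M$ \emph{first} and compare the deterministic $\mathbb{E}[\frrrf]$ to $\fridgekrr$; the variance can then be handled separately exactly as in the overparameterized theorem.
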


The above bound estimates the approximation between the ridgeless RF estimator $\frrrf$ and the ridge regression $\fridgekrr$, such that the ridgeless RF essentially imposes an \textit{implicit regularization}.
For the shift-invariant kernels $K(\xx, \xx') = h(\|\xx - \xx'\|)$, i.e. Gaussian kernel $K(\xx, \xx') = \exp(-\|\xx-\xx'\|^2/(2\sigma^2))$, the trace of matrix is a constant $\text{Tr}(\mathbf{K}) = n$.
The number of random features needs $M = \Omega(n^{0.75})$ to achieve the convergence rate $\mO(1/\sqrt{n})$ for Theorem \ref{thm.rf_underparameterized}.
Note that $\widetilde{\mathcal{N}}(\lambda) := \text{Tr}\left[\mathbf{K}(\mathbf{K} + \lambda n I)^{-1}\right]$ is known as the empirical \textit{effective dimension}, which has been used to control the capacity of hypothesis space \cite{caponnetto2007optimal} and sample points via leverage scores \cite{rudi2018fast}.
Theorem \ref{thm.rf_underparameterized} states the \textit{implicit regularization} effect of random features, of which the regularizer parameter is related to the features dimensional $M$.

Together with Theorem \ref{thm.rf_overparameterized}, Theorem \ref{thm.rf_underparameterized} and Figure \ref{fig.exp1_stochastic} (a), we discuss the influence from different feature dimensions $M$ for ridgeless RF predictor $\frrrf$:

1) In the underparameterized regime $M < n$, the regularization parameter is inversely proportional to the feature dimensional $\lambda \propto 1/M$.
The effect of implicit regularity becomes greater as we reduce the features dimensional, while the implicit regularity reduces to zero as $M \to n$.
As the increase of $M$, the test error drops at first and then rises due to overfitting (or explored variance).

2)  At the interpolation threshold $M = n$, the condition $\text{Tr}\left[\mathbf{K}(\mathbf{K} + \lambda I)^{-1}\right] = M$ leads to $\lambda = 0$. Thus, $M = n$ not only split the overparameterized regime and the underparameterized regime, but also is the start of implicit regularization for ridgeless RF.
    At this point, the variance of ridgeless RF predictor explodes, leading to double descent curve.

3) In the overparameterized case $M > n$, the ridgeless RF predictor is an unbiased estimator ridgeless kernel predictor, but the effective ridge goes to zero.
As the number of random features increases, the test error of ridgeless RF drops again.

\begin{remark}[Excess risk bounds]
    From \eqref{eq.error_decomposition}, one can derive the entire excess risk bound for ridgeless RF-SGD $\|\frrsgd - \frho\|$ by using the existing ridge regression bound $\|\fridgekrr - \frho\|$ in \cite{Bartlett2005lrc,caponnetto2007optimal} for Theorem \ref{thm.rf_underparameterized}, and ridgeless regression bound $\|\fkrr - \frho\|$ in \cite{liang2020just} for Theorem \ref{thm.rf_overparameterized}.
    These risk bounds usually characterized the bias-variance decomposition with $\|f - \frho\|_\Ltwo^2 \leq {\boldsymbol B} + {\boldsymbol V}, ~ \forall f \in \Ltwo$, where the bias usually can be bounded by ${\boldsymbol B} \lesssim \frac{1}{n} \sqrt{\text{Tr}(\mathbf{K})}$.
    For the conventional kernel methods, the kernel matrix is fixed and thus $\text{Tr}(\mathbf{K})$ is a constant.

\end{remark}

\begin{algorithm}[t]
    \caption{\small Ridgeless RF with Tunable Kernels (\texttt{RFTK})}
    \label{alg.rftk}
    \begin{algorithmic}[1]
        \REQUIRE Training data $(\XX, \yy)$ and 
        feature mapping $\phi: \mathbb{R}^d \to \mathbb{R}^D.$
        Hyparameters $\sigma^2, \beta,T, b, \gamma, \eta$ and $s$.

        \ENSURE The ridgeless RF model $\ww_T$ and the learned $\OO$.
        \FOR{$t = 1, 2, \cdots, T$}
        \STATE Sample a batch examples $(\xx_i, ~ \yy_i)_{i=1}^b \in (\XX, \yy).$
        \STATE Update RF model weights $\ww_t = \ww_{t-1} - \gamma \frac{\partial \mL(\ww, \OO)}{\partial \ww}$
        \IF{$t \% s == 0 $}
        \STATE Optimize frequency matrix $\OO_t = \OO_{t-1} - \eta \frac{\partial \mL(\ww, \OO)}{\partial \OO}$
        \ENDIF
        \ENDFOR
    \end{algorithmic}
  \end{algorithm}  

\section{Random Features with Tunable Kernels}
By noting that $\text{Tr}(\mathbf{K})$ play a dominate role in random features error in Theorem \ref{thm.rf_underparameterized} and bias, we thus design a tunable kernel learning algorithm by reducing the trace $\text{Tr}(\mathbf{K})$.
We first transfer the trace to trainable form by random features
\begin{align*}
    \text{Tr}(\mathbf{K}) \approx \text{Tr} \left(\phi(\XX)\phi(\XX)^\top\right) = \|\phi(\XX)\|_F^2,
\end{align*}
where $\phi: \mX \to \RR^M$ depends on the spectral density $\pi$ according to \eqref{eq.rrrf} and $\|\cdot\|_F^2$ is the squared Frobenius norm for a matrix.
Since $\|\cdot\|_F^2$ is differentiable w.r.t. $\OO$, we thus can optimize the kernel density with backpropagation.
For example, considering the random Fourier features \cite{rahimi2007random}, the feature mappings can be written as 
\begin{align}
    \label{eq.rff}
    \phi(\xx) = \frac{1}{\sqrt{M}} \cos(\OO^\top\xx + \bb),
\end{align}
where the frequency matrix $\OO = [\oo_1, \cdots, \oo_M] \in \RR^{d \times M}$ composed $M$ vectors drawn i.i.d. from a Gaussian distribution $\mathcal{N}(0, \frac{1}{\sigma^2}\mathbf{I}) \in \RR^d$.
The phase vectors ${\boldsymbol b} = [b_1, \cdots, b_M] \in \RR^M$ are drawn uniformly from $[0, 2\pi]$.

\begin{figure*}[t]
    \centering
    \subfigure[The variance factor $\alpha$ w.r.t. $M/n$]{
		\centering
        \includegraphics[width=.28\textwidth]{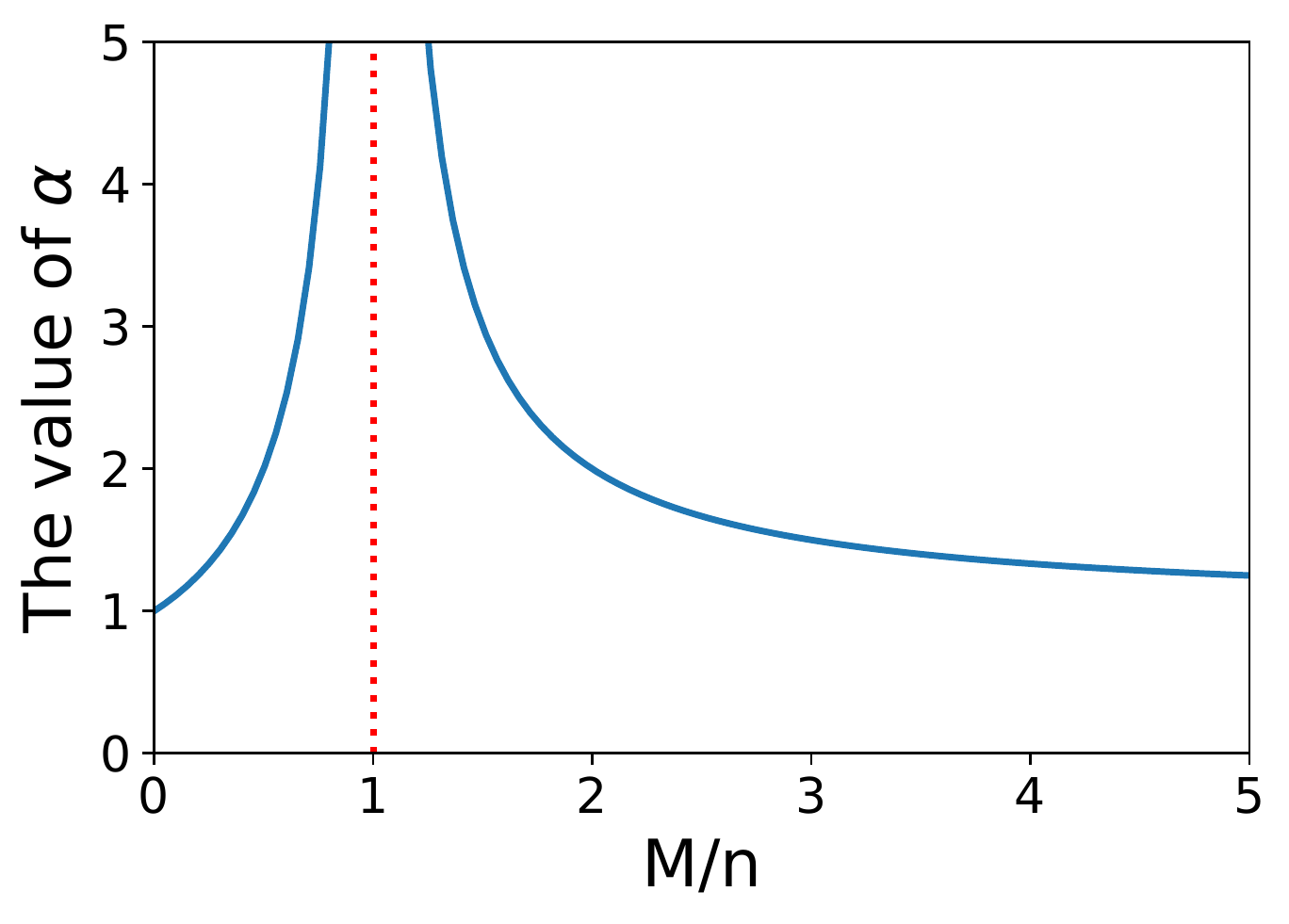}
	}
    \subfigure[Stochastic error vs. factors $b, \gamma$ and $t$]{
		\centering
        \includegraphics[width=.28\textwidth]{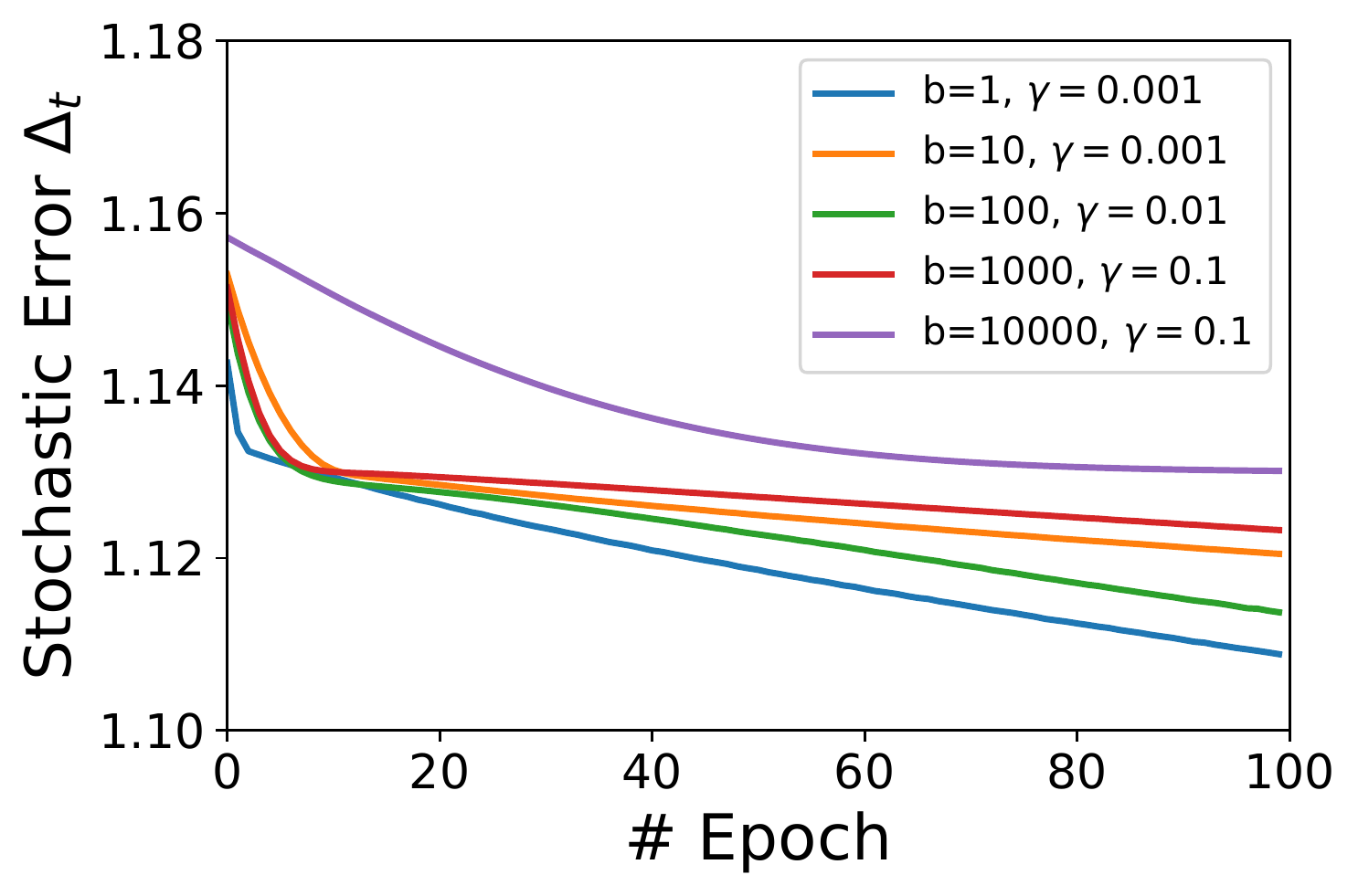}
	}
    \subfigure[MSE vs. the factors $b, \gamma$ and $t$]{
		\centering
        \includegraphics[width=.28\textwidth]{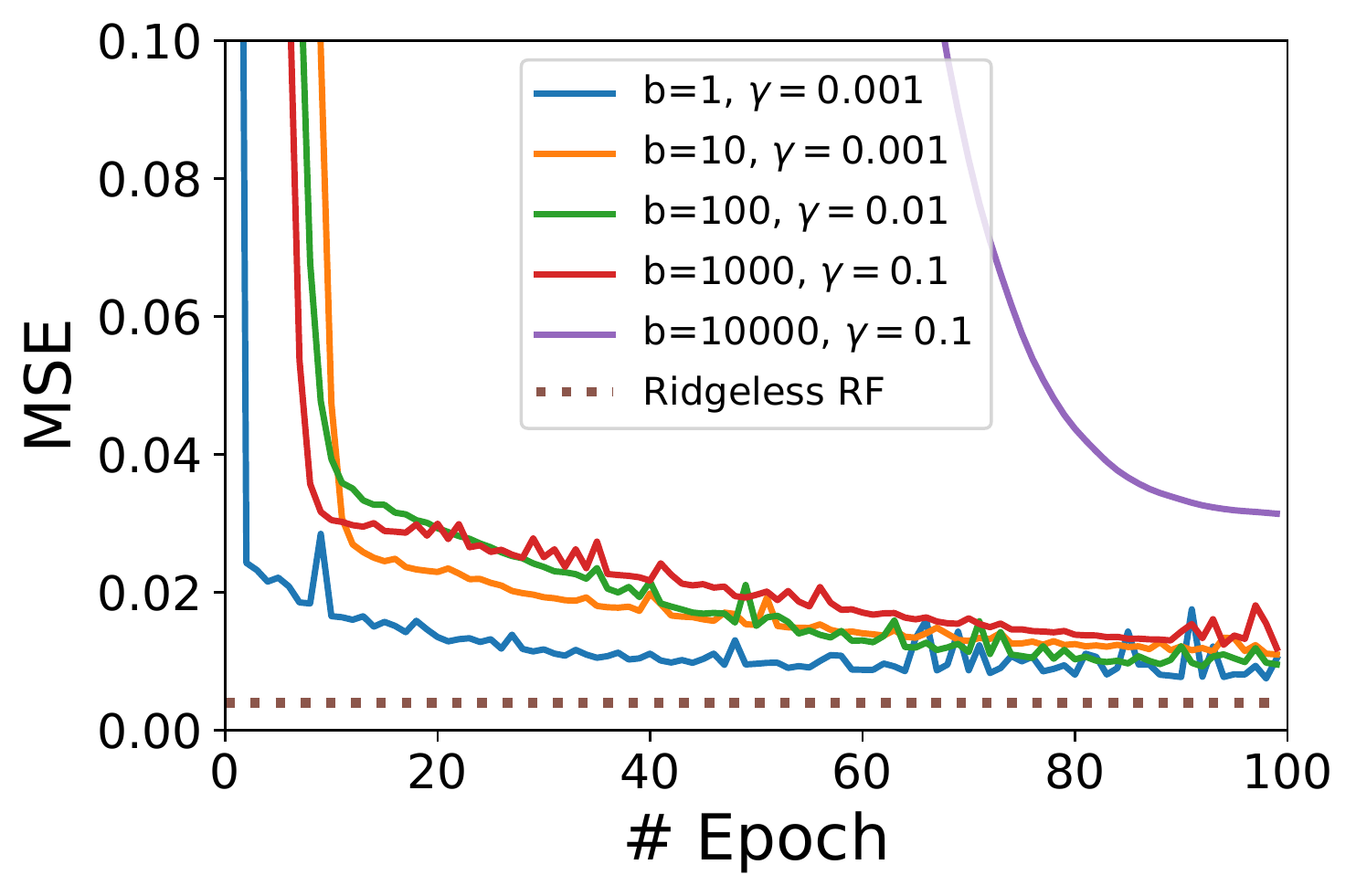}
	}
    \caption{(a) The variance factor $\alpha$ w.r.t. ratio $M/n$, which make the variance explores near the transition point $M = n$. (b) Empirical stochastic gradient errors $\Delta_t = \frac{1}{n} \sum_{i=1}^n|\frrsgd(\xx_i) - \frrrf(\xx_i)| $ of the ridgeless RF-SGD predictors on $n=10000$ synthetic examples. (c) The test MSE of all ridgeless RF-SGD estimators. }
    \label{fig.exp1_stochastic}
\end{figure*}

\begin{figure*}[t]
    \centering
    \subfigure[Test errors v.s. regulerizations]{
		\centering
        \includegraphics[width=.28\textwidth]{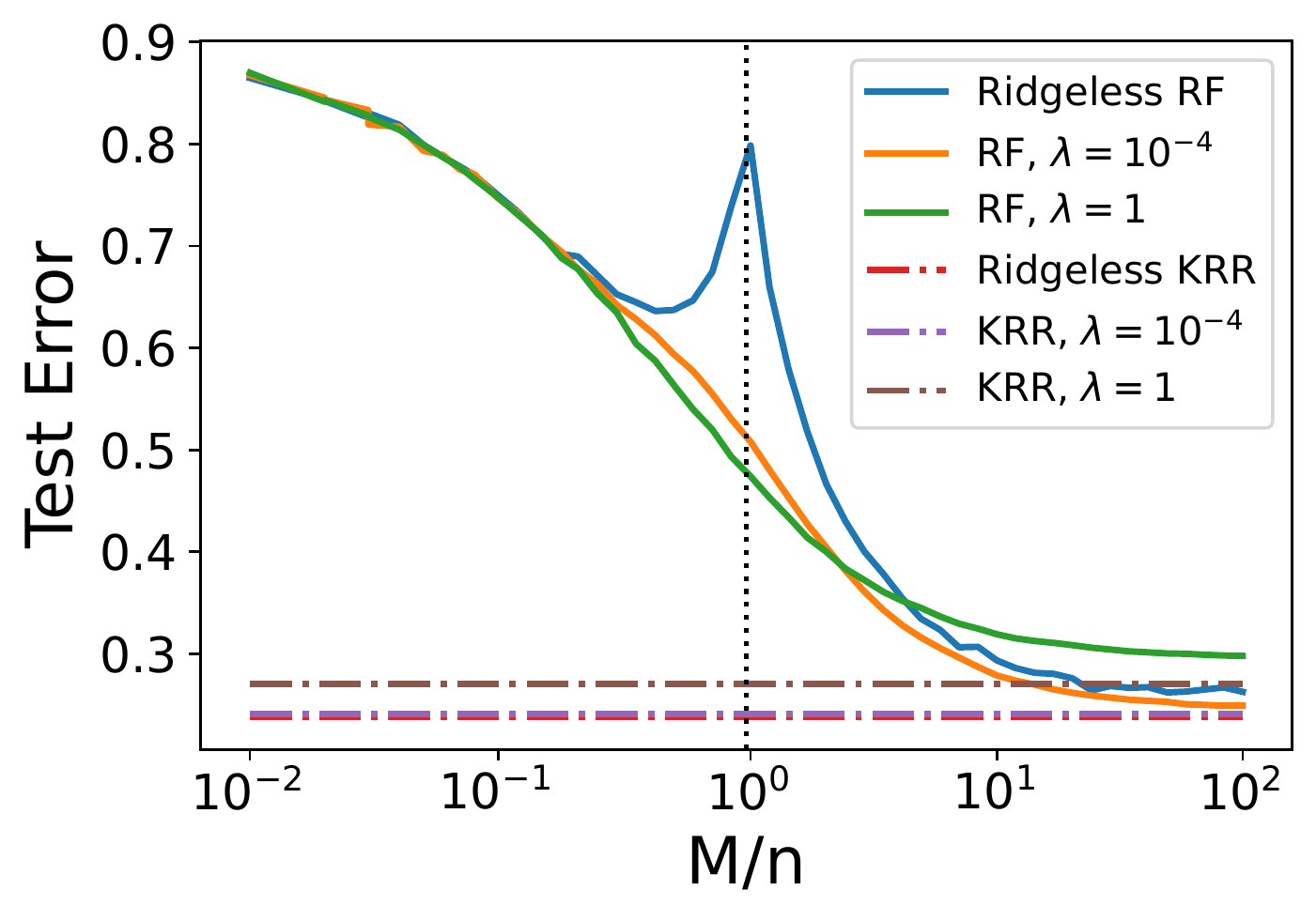}
	}
    \subfigure[Test accuracies of compared methods]{
		\centering
        \includegraphics[width=.28\textwidth]{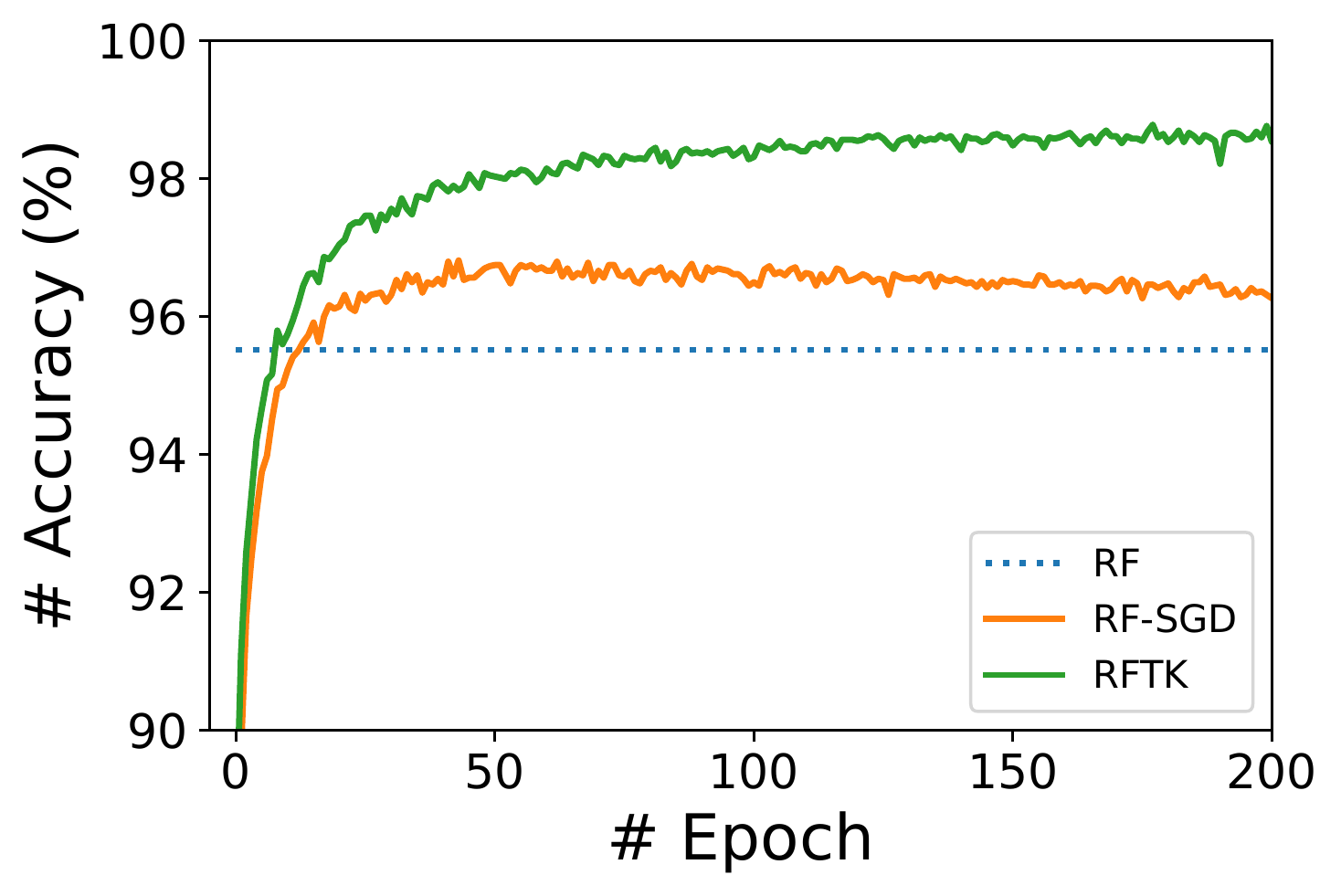}
	}
    \subfigure[Training loss and the kernel trace]{
		\centering
        \includegraphics[width=.28\textwidth]{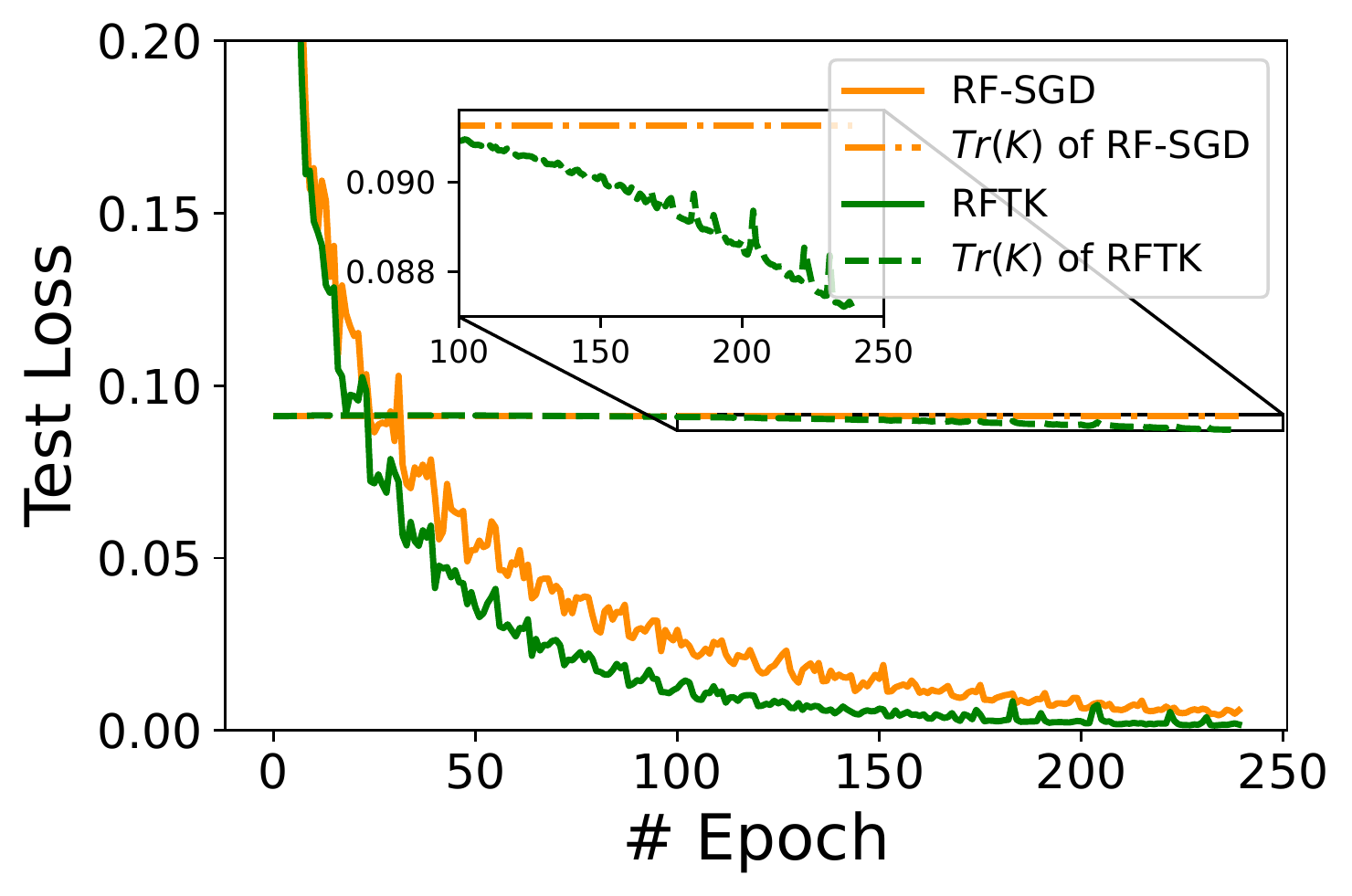}
	}
    \caption{(a) Test errors of the RF predictors (solid lines) and kernel predictors (dashed lines) w.r.t. different regularization. Note that, the ridgeless RF predictors exhibit a double descent curve.  (b) Test accuracies of the compared methods on the MNIST dataset. (c) Training loss (solid lines) and the trace of kernel $\text{Tr}(\mathbf{K})$ (dashed lines) of the RF predictors on the MNIST dataset.}
    \label{fig.exp2_rf_exp3}
\end{figure*}

\begin{table*}[t]
    \centering
    \begin{tabular}{@{\extracolsep{0.6cm}}l|cc|cc|c}
        \toprule
                   & Kernel Ridge             & Kernel Ridgeless            & RF                        & RF-SGD                       & \texttt{RFTK}             \\ \hline
                   dna  &52.83$\pm$1.66   &49.67$\pm$18.54  &52.83$\pm$1.66  &51.33$\pm$1.89  &\textbf{92.92$\pm$0.89} \\
                   letter  &\textbf{96.54$\pm$0.25}   &\underline{96.40$\pm$0.15}  &95.33$\pm$0.32  &91.74$\pm$0.46  &\underline{96.17$\pm$0.29} \\
                   pendigits  &97.46$\pm$0.42   &90.67$\pm$4.75  &96.91$\pm$0.43  &46.04$\pm$5.62  &\textbf{98.64$\pm$0.43} \\
                   segment  &82.99$\pm$1.85   &56.71$\pm$10.71  &83.44$\pm$1.69  &37.75$\pm$9.11  &\textbf{94.55$\pm$1.52} \\
                   satimage  &90.43$\pm$0.48   &88.79$\pm$0.77  &87.67$\pm$0.89  &90.33$\pm$1.36  &\textbf{90.79$\pm$1.23} \\
                   usps  &92.49$\pm$0.70   &87.47$\pm$7.38  &94.38$\pm$0.60  &49.81$\pm$3.52  &\textbf{97.29$\pm$0.61} \\
                   svmguide2  &81.90$\pm$2.78   &70.13$\pm$4.91  &66.20$\pm$4.64  &81.65$\pm$4.25  &\textbf{82.78$\pm$4.84} \\
                   vehicle  &63.00$\pm$2.80   &79.35$\pm$2.89  &75.94$\pm$2.88  &74.24$\pm$3.92  &\textbf{80.06$\pm$4.49} \\
                   wine  &39.17$\pm$6.27   &48.89$\pm$13.68  &91.11$\pm$19.40  &43.89$\pm$6.19  &\textbf{98.33$\pm$1.36} \\
                   shuttle  &/   &/  &79.08$\pm$26.76  &98.94$\pm$0.48  &\textbf{99.63$\pm$0.19} \\
                   Sensorless  &/   &/  &32.92$\pm$8.22  &17.72$\pm$3.84  &\textbf{86.10$\pm$0.73} \\
                   MNIST  &/   &/  &95.52$\pm$0.16  &96.61$\pm$0.11  &\textbf{98.09$\pm$0.07} \\
        \bottomrule
    \end{tabular}
    \caption{
        \normalsize  Classification accuracy (\%) for classification datasets. We bold the results with the best method and underline the ones that are not significantly worse than the best one.}
    \label{tabel.accuracy}
\end{table*}

Theoretical findings illustrate that smaller trace of kernel matrix $\text{Tr}(\mathbf{K})$ can lead to better performance.
We first propose a bi-level optimization learning problem
\begin{equation}
    \begin{aligned}
        \label{obj.bi_level}
        \min_{\ww} \quad & \frac{1}{n}\sum_{i=1}^n \left(\ww^\top \phi(\xx_i) - y_i \right)^2 \\
        \text{s.t.} \quad &\OO^* = \argmin_{\OO} ~ \|\phi(\XX)\|_F^2.
    \end{aligned}
\end{equation}

The above objective includes two steps: 1) given a certain kernel, i.e. the frequency matrix $\OO = [\oo_1, \cdots, \oo_M] \sim \pi(\ww)$, the algorithm train the ridgeless RF model $\ww$; 2) given a trained RF model $\ww$, the algorithm optimize the spectral density (the kernel) by updating the frequency matrix $\OO$.

To accelerate the solve of \eqref{obj.bi_level}, we optimize $\ww$ and $\OO$ jointly by minimize the the following objective
\begin{equation}
    \label{equation.primal-objective}
        \mL(\ww, \OO) =  \frac{1}{n} ~ \sum_{i=1}^n \left(f(\xx_i) - \yy_i\right)^2 + \beta \|\phi(\boldsymbol{X})\|_F^2,
\end{equation}
where $\beta$ is a hyperparameter to balance the effect between empirical loss and the trace of kernel matrix.
Here, the update of $\ww$ is still only dependent on the squared loss (thus it is still ridgeless), but the update of $\OO$ is related to both the squared loss and Frobenius norm.


Therefore, the time complexity for update $\ww$ once is $\mO(Mb)$. 
However, since the trace is defined on all data, the update of $\OO$ is relevant to all data and time complexity is $\mO(nMd)$, which is infeasible to update in every iterations.
Meanwhile, the kernel needn't to update frequently, and thus we apply an asynchronous strategy for optimizing the spectral density.
As shown in Algorithm \ref{alg.rftk}, the frequency matrix $\OO$ is updated after every $s$ iterations for the update of $\ww$.
The total complexity for the asynchronous strategy is $\mO(nMdT/s)$.

\section{Experiments}

We use random Fourier feature defined in \eqref{eq.rff} to approximate the Gaussian kernel $K(\xx, \xx') = \exp(- \sigma^2 \|\xx-\xx'\|^2 /2)$.
Note that, the corresponding random Fourier features \eqref{eq.rff} is with the frequency matrix $\OO \sim \mathcal{N}(0, \sigma^2)$.
We implement all code on Pytorch \footnote{\scriptsize\url{https://github.com/superlj666/Ridgeless-Regression
-with-Random-Features}} and tune hyperparameters over $\sigma^2 \in \{0.01, \cdots, 1000\}$, $\lambda = \{0.1, \cdots, 10^{-5}\}$ by grid search.

\subsection{Numerical Validations}

\subsubsection{Factors of Stochastic Gradient Methods}
We start with a nonlinear problem $y = \min(-\ww^\top \xx, \ww^\top \xx) + \epsilon$, where $\epsilon \sim \mathcal{N}(0, 0.2)$ is the label noise, and $\xx \sim \mathcal{N}(0, \mathbf{I})$.
Setting $d=10$, we generate $n=10000$ samples for training and $2500$ samples for testing.
The optimal kernel hyperparameter is $\sigma^2 = 100$ after grid search.

As stated in Theorem \ref{thm.sgd}, the stochastic gradients error is determined by the batch size $b$, learning rate $\gamma$ and the iterations $t$.
To explore effects of these factors, we evaluate the approximation between the ridgeless RF-SGD estimator $\frrsgd$ and the ridgeless RF estimator $\frrrf$ on the simulated data.
Given a batch size $b$, we tune the learning rate $\gamma$ w.r.t. the MSE over $\gamma \in \{10^1, 10^0, \cdots, 10^{-4}\}$.
As shown in Figure \ref{fig.exp1_stochastic} (b), (c), give the $b, \gamma$, we estimate the ideal iterations $t = \text{\# Epoch} * 10000 / b$ after which the error drops slowly
As the step size $b$ increases, the learning rates $\gamma$ become larger while the iterations reduces.
This coincides with the tradeoffs among these factors $b/\gamma + 1/(\gamma t)$ in Theorem \ref{thm.sgd}, where the balance between $b/\gamma$ and $1/\gamma t$ leads to better performance.

Comparing Figure \ref{fig.exp1_stochastic} (b) and (c), we find that: 1) After the same passes over the data (same epoch), the stochastic gradient error and MSE of the algorithms with smaller batch sizes are smaller with faster drop speeds. 2) The stochastic error still decreases after the MSE converges, where the other error terms dominates the excess risk, i.e. bias of predictors.

\subsubsection{Double Descent Curve in Random Features}
To characterize different behaviors of the random features error w.r.t. $M$, we fixed the number of training examples $n$ and changes the random features dimensional $M$.
Figure \ref{fig.exp2_rf_exp3} (a) reports test errors in terms of different ratios $M/n$, illustrating that: 1) Kernel methods with smaller regularization lead to smaller test errors, and the reason may be regularization hurts the performance when the training data is "clean". 2) Test errors of RF predictors converges to the corresponding kernel predictors, because a larger $M$ have better approximation to the kernel. This coincides with Theorem \ref{thm.rf_overparameterized} and \ref{thm.rf_underparameterized} that a larger $M$ reduces the random features error. 3) When the regularization term $\lambda$ is small, the predictors leads to \textit{double descent curves} and $M = n$ is the transition point dividing the underparamterized and overparameterized regions.
The smaller regularity, the more obvious curve the RF predictor exhibits.

In the overparameterized case, the error of RF predictor converges to corresponding kernel predictor as $M$ increases, verifying the results in Theorem \ref{thm.rf_overparameterized} that the variance term reduces given more features when $M > n$.
In the underparamterized regime, the errors of RF predictors drop first and then increase, and finally explodes at $M=n$.
These empirical results validates theoretical findings in Theorem \ref{thm.rf_underparameterized} that the variance term dominates the random features near $M = n$ and it is significantly large as shown in Figure \ref{fig.exp1_stochastic} (a).

\begin{table}[t]
    \centering
    \resizebox{\columnwidth}{!}{
    \setlength\extrarowheight{3pt}
    \begin{tabular}{lllm{1.7cm}}
        \toprule
        Methods     & Time         & Density & Regularizer \\ \hline
        Kernel Ridge            & $\mO(n^3)$     & Assigned & Ridge        \\
        Kernel Ridgeless            & $\mO(n^3)$     & Assigned & Ridgeless        \\
        RF           & $\mO(nM^2 + M^3)$ & Assigned & Ridgeless       \\
        RF-SGD           & $\mO(MbT)$     & Assigned  & Ridgeless        \\
        \texttt{RFTK}          & $\mO(nMbT/s)$ & Learned  & $\|\phi(\XX)\|_F^2$        \\ 
        \bottomrule
    \end{tabular}
    }
    \caption{Compared algorithms.}
    \label{tab.compared-methods}
\end{table}

\subsubsection{Benefits from Tunable Kernel}
Motivated by the theoretical findings that the trace of kernel matrix $\text{Tr}(\mathbf{K})$ influences the performance, we design a tunable kernel method \texttt{RFTK} that adaptively optimizes the spectral density in the training.
To explore the influence of factors upon convergence, we evaluate both test accuracy and training loss on the MNIST dataset.
Compared with the exact random features (RF) and random features with stochastic gradients (RF-SGD), we conduct experiments on the entire MNIST datasets.
From Figure \ref{fig.exp2_rf_exp3} (b), we find there is a significant accuracy gap between RF-SGD and \texttt{RFTK}.
Figure \ref{fig.exp2_rf_exp3} (c) indicates the trace of kernel matrix term takes affects after the current model fully trained near $100$ epochs, and it decrease fast.
Specifically, since the kernel is continuously optimized, more iterations can still improve the its generalization performance, while the accuracy of RF-SGD decreases after $100$ epochs because of the overfitting to the given hypothesis.

Figure \ref{fig.exp2_rf_exp3} (b) and (c) explain the benefits from the penalty of $\text{Tr}(\mathbf{K})$ that optimizes the kernel during the training, avoiding kernel selection.
Empirical studies shows that smaller $\text{Tr}(\mathbf{K})$ guarantees better performance, coinciding with the theoretical results in Theorem \ref{thm.sgd} and Theorem \ref{thm.rf_underparameterized} that both stochastic gradients error and random features error depends on $\text{Tr}(\mathbf{K})$. 

\subsection{Empirical Results}
Compared with related algorithms listed in Table \ref{tab.compared-methods}, we evaluate the empirical behavior of our proposed algorithm \texttt{RFTK} on several benchmark datasets.
Only the kernel ridge regression makes uses of the ridge penalty $\|\ww\|_2^2$, while the others are ridgeless.
For the sake of presentation, we perform regression algorithms on classification datasets with one-hot encoding and cross-entropy loss.
We set $b=32$ and $100$ epochs for the training, and thus the stop iteration is $T = 100n/32$.
Before the training, we tune the hyperparameters $\sigma^2, \lambda, \gamma, \beta$ via grid search for algorithms on each dataset.
To obtain stable results, we run methods on each dataset 10 times with randomly partition such that 80\% data for training and $20\%$ data for testing.
Further, those multiple test errors allow the estimation of the statistical significance among methods.

Table \ref{tabel.accuracy} reports the test accuracies for compared methods over classification tasks.
It demonstrates that: 1) In some cases, the proposed \texttt{RFTK} remarkably outperforms the compared methods, for example dna, segment and Sensorless. That means \texttt{RFTK} can optimize the kernel even with a bad initialization, which makes it more flexible than the schema (kernel selection + learning).
2) For many tasks, the accuracies of \texttt{RFTK} are also significantly higher than kernel predictors, i.e. dna, because the learned kernel is more suitable for these tasks.
3) Compared to kernel predictors, \texttt{RFTK} leads to similar or worse results in some cases. The reason is that kernel hyperparameter $\sigma^2$ has been tuned and in these cases they are near the optimal one, and thus the spectral density is changed a little by \texttt{RFTK}.

\section{Conclusion}
We study the generalization properties for ridgeless regression with random features, and devise a kernel learning algorithm that asynchronously tune spectral kernel density during the training.
Our work filled the gap between the generalization theory and practical algorithms for ridgeless regression with random features.
The techniques presented here provides theoretical and algorithmic insights for understanding of neural networks and designing new algorithms. 

\section*{Acknowledgments}
This work was supported in part by the Excellent Talents Program of Institute of Information Engineering, CAS, the Special Research Assistant Project of CAS, the Beijing Outstanding Young Scientist Program (No. BJJWZYJH012019100020098), Beijing Natural Science Foundation (No. 4222029), and National Natural Science Foundation of China (No. 62076234, No. 62106257).

\bibliographystyle{named}
\bibliography{all}

\newpage
\appendix
\onecolumn
\section{Proofs}
In this section, we provide brief proofs for Theorem \ref{thm.sgd} and Theorem \ref{thm.rf_underparameterized}, respectively.
\begin{proof}[Proof for Theorem \ref{thm.sgd}]
    We introduce gradient descent algorithm for random features $\frrgd(\xx) = \langle \vv_t, \phi(\xx) \rangle, ~ \text{with}$ with $\vv_1 = 0$ and
    \begin{align}
        \label{}
        \vv_{t+1} = \vv_t - \frac{\gamma_t}{n} \sum_{i=1}^{n} \left(\langle \vv_t, \phi(\xx_i) \rangle - y_i\right) \phi(\xx_i).
    \end{align}

    By \eqref{eq.error_decomposition}, we consider the decomposition
    \begin{equation}
        \begin{aligned}
            \label{eq.rf_rfsgd.decomposition}
            \|\frrsgd - \frrrf\| \leq \|\frrsgd - \frrgd\| 
            + \|\frrgd - \frrrf\|.
        \end{aligned}
    \end{equation}

    From Lemma 5 in \cite{carratino2018learning}, under Assumptions \ref{asm.rf}, \ref{asm.moment}, let $\delta \in (0, 1)$, $\gamma \leq \frac{n}{9T\log\frac{n}{\delta}} \wedge \frac{1}{8(1 + \log T)}$ and $n \geq 32 \log^2 \frac{2}{\delta}$, the following bounded holds with the probability $1-2\delta$
    \begin{align}
        \label{eq.rfsgd_rfgd}
        \|\frrsgd - \frrgd\|_\Ltwo^2 \leq 208Bp \left(\frac{\gamma}{b}\right).
    \end{align}

    Setting $\gamma_t = \gamma$, one can derive the closed-form solution for stochastic gradient
    \begin{align*}
        \frrgd(\xx) 
        = \phi(\xx) \sum_{l=0}^{t-1} \left[I - \frac{\gamma}{n} \phi(\XX)^\top\phi(\XX) \right]^l \frac{\gamma}{n} \phi(\XX)^\top \yy 
    \end{align*}
    By the identity $\sum_{\tau = 0}^{t-1} (1-Z)^\tau Z = I - (I - Z)^t$, we get that
    \begin{align*}
        \frrgd(\xx)
        &=\phi(\xx) \sum_{l=0}^{t-1} \left[I - \frac{\gamma}{n} \phi(\XX)^\top\phi(\XX) \right]^l \left(\frac{\gamma}{n} \phi(\XX)^\top\phi(\XX)\right) \\
        &\quad \left(\frac{\gamma}{n} \phi(\XX)^\top\phi(\XX)\right)^{-1} \frac{\gamma}{n} \phi(\XX)^\top \yy \\
        &=\phi(\xx) \left[I - \left(I - \frac{\gamma}{n} \phi(\XX)^\top\phi(\XX)\right)^t\right] \\
        & \quad \left(\phi(\XX)^\top\phi(\XX)\right)^{-1} \phi(\XX)^\top \yy.
    \end{align*}

Denote the kernel operator $\mathbf{K}_M = \phi(\XX) \phi(\XX)^\top$.
Using the identity $(Z^\top Z)^{-1}Z = (Z^\top Z)^{-1}Z (Z Z^\top) (Z Z^\top)^{-1} = Z^\top (Z Z^\top)^{-1}$ for any matrix $Z$, we have
\begin{equation}
    \begin{aligned}
        \label{eq.proof.gd_rf}
        &|\frrgd(\xx) - \frrrf(\xx)| \\
        =&\Bigg| \phi(\xx) \left(I - \frac{\gamma}{n} \phi(\XX)^\top\phi(\XX)\right)^t \left(\phi(\XX)^\top\phi(\XX)\right)^{-1} \phi(\XX)^\top \yy \Bigg| \\
        =&\Bigg| \phi(\xx) \left(I - \frac{\gamma}{n} \phi(\XX)^\top\phi(\XX)\right)^t \phi(\XX)^\top\left(\phi(\XX) \phi(\XX)^\top \right)^{-1}\yy \Bigg| \\
        =&\Bigg| \phi(\xx) \phi(\XX)^\top \left(I - \frac{\gamma}{n} \mathbf{K}_M \right)^t \left(\mathbf{K}_M \right)^{-1}\yy \Bigg| \\
    \end{aligned}
\end{equation}

Consider the eigenvalue decomposition for kernel matrix
\begin{align*}
    \frac{1}{n}\mathbf{K}_M = U \Lambda U^\top,
\end{align*}
where $UU^\top = U^\top U = I$ records the eigenvectors and $\Lambda$ is a $n \times n$ diagonal with eigenvalues.
Consequently,
\begin{align*}
    \left(I - \frac{\gamma}{n} \mathbf{K}_M \right)^t = \left[U^{-1}U - \frac{\gamma}{n} U^{-1} \Lambda U\right]^t = U^{-1} \left[I - \frac{\gamma}{n} \Lambda \right]^t U.
\end{align*}
Then, it holds
\begin{equation}
    \begin{aligned}
        \label{eq.proof.iteration}
        &\left\|\phi(\XX)^\top \left(I - \gamma \mathbf{K}_M \right)^t\right\|^2_2 \\
        \leq &\left\|\phi(\XX)^\top \left(I - \gamma \mathbf{K}_M \right)^t\right\|^2_2 \\
        \leq &\left\|\phi(\XX)^\top U^{-1} \left[I - \gamma \Lambda \right]^t\right\|_2^2 \big\|U\big\|_2^2 \\
        \leq &\left\|\left[I - \gamma \Lambda \right]^t U^{-1} K(U^{-1}) \left[I - \gamma \Lambda \right]^t\right\|_2 \\
        = &\left\|\left[I - \gamma \Lambda \right]^t U \left[I - \gamma \Lambda \right]^t \right\|_2 \\
        = &\max_{j \in [n]} (1 - \gamma \Lambda_{jj})^{2t} \Lambda_{jj} \\
        \leq & \frac{1}{2e \gamma t},
    \end{aligned}
\end{equation}
where $\Lambda_{jj}$ is the $j$-th eigenvalue in the diagonal of $\Lambda$.

Substituting \eqref{eq.proof.iteration} to \eqref{eq.proof.gd_rf}, we have
\begin{equation}
    \begin{aligned}
        \label{eq.rfgd_rf}
        &\|\frrgd - \frrrf\|^2_\Ltwo \\
        \leq &\|\phi(\xx)\|^2 \left\|\phi(\XX)^\top \left(I - \frac{\gamma}{n} \mathbf{K}_M \right)^t\right\|^2 \|\yy\|_{K^{-1}} \\
        \leq &\frac{\kappa^2}{2e \gamma t} \|\yy\|_{K^{-1}}.
    \end{aligned}
\end{equation}
Here, $\|\yy\|_{K^{-1}}$ denote the inverse kernel norm of the labels defined as $\|\yy\|_{K^{-1}} = \yy^\top (K(\XX, \XX))^{-1} \yy$.
According to \cite{scholkopf1998nonlinear}, it holds $\|\yy\|_{K^{-1}} \leq \|\frho\|_K$ under the assumption that the true regression $\frho(\xx) = \langle \frho, K_{\xx} \rangle$ lies in the RKSH of the kernel $K$, i.e., $\frho \in \mh$.

Substituting \eqref{eq.rfsgd_rfgd} and \eqref{eq.rfgd_rf} to \eqref{eq.rf_rfsgd.decomposition}, we prove the result.
\end{proof}

\begin{proof}[Proof of Theorem \ref{thm.rf_overparameterized}]
    The Ridgeless RF predictor admits the bias-variance decomposition
    \begin{align*}
        \mathbb{E} \mE(\frrrf) = \mE\left[\mathbb{E}~ (\frrrf)\right] + \mathbb{E} \left[\text{Var}(\frrrf)\right].
    \end{align*}

    And thus, the random features error have
    \begin{align}
        \label{eq.decomposition.rf_error}
        \mathbb{E} \mE(\frrrf) - \mE(\fkrr) = \left\|\mathbb{E}~ (\frrrf) - \fkrr\right\|_\Ltwo^2 + \mathbb{E}\left[\text{Var}(\frrrf)\right].
    \end{align}
    
    We estimate these two terms, respectively.
    From Proposition 3.1, one can obtain 
    \begin{align}
        \label{eq.rf_unbiased}
        \mathbb{E} ~ \big[\frrrf(\xx)\big] = K(\xx, \XX) K(\XX, \XX)^\dagger \yy = \fkrr(\xx).
    \end{align}
    
    Thus, we only need to bound the variance in the overparameterized case.
    From Proposition C.15 \cite{jacot2020implicit}, the variance of ridgeless RF predictor can be bounded by
    \begin{align}
        \label{eq.variance.overparameterized}
        \mathbb{E}\left[\text{Var}(\frrrf)\right] \leq \frac{(\alpha + c_1) \kappa \|\yy\|^2_{K^{-1}}}{M},
    \end{align}
    where $\alpha$ is some constant related to $M/n$.
    From Proposition C.11 and $\tilde{\lambda} = 0$, we obtain $\alpha = \frac{M/n}{M/n-1}$ when $M \geq n$.

    By substituting \eqref{eq.rf_unbiased} and \eqref{eq.variance.overparameterized} to \eqref{eq.decomposition.rf_error}, we prove the result.
\end{proof}

\begin{proof}[Proof of Theorem \ref{thm.rf_underparameterized}]
    Theorem 4.1 in \cite{jacot2020implicit} provided the approximate between the ridge RF estimator and the ridge regression estimator, we make the RF estimator to the ridgeless case.
    Theorem 4.1 provided that
    \begin{align}
        \label{eq.proof.rf_approximation}
        |\mathbb{E} \widehat{f}_{M, \tilde{\lambda}}(\xx) - \fridgekrr(\xx)| \leq \frac{c\sqrt{K(\xx, \xx)}\|\yy\|_{K^{-1}}}{M}
    \end{align}
    with the effective ridge $\tilde{\lambda}$ satisfying
    \begin{align*}
        \lambda = \tilde{\lambda} + \frac{{\lambda}}{M/n} \sum_{i=1}^n \frac{d_i}{{\lambda} + d_i}
    \end{align*}
    where $d_i$ is the $j$-th element on diagonal form of $\mathbf{K}$.
    We consider the ridgeless RF case with $\tilde{\lambda} = 0$, such that
    \begin{align*}
        \frac{M}{n} = \frac{1}{n} \sum_{i=1}^n \frac{d_i}{\lambda + d_i} = \frac{1}{n} \text{Tr}\left[\mathbf{K}(\mathbf{K} + \lambda I)^{-1}\right].
    \end{align*}

    From the proof of Proposition C.7 in \cite{jacot2020implicit}, the constant is
    $c = \left(\frac{\text{Tr}(\mathbf{K})}{M}\right)^2$.
    Using Assumption \ref{asm.rf}, we have $\sqrt{K(\xx, \xx)} \leq \kappa$
    Substituting the estimate for $\lambda, c, \sqrt{K(\xx, \xx)}$ to \eqref{eq.proof.rf_approximation}, we obtain
    \begin{align}
        \label{eq.rf_error.bias}
        \left\|\mathbb{E}[\frrrf] - \fridgekrr\right\|_\Ltwo^2 \leq \frac{\kappa^2 \text{Tr}(\mathbf{K})^4 \|\yy\|_{K^{-1}}^2}{M^6}.
    \end{align}

    From Proposition C.15 \cite{jacot2020implicit}, the variance of ridgeless RF predictor can be bounded by
    \begin{align}
        \label{eq.variance.underparameterized}
        \mathbb{E}\left[\text{Var}(\frrrf)\right] \leq \frac{(\alpha + c_1) \kappa \|\yy\|^2_{K^{-1}}}{M},
    \end{align}
    where $\alpha$ is related to $M/n$.
    From Proposition C.11, setting $\tilde{\lambda} = 0$, we obtain $\alpha = \frac{1}{1-M/n}$ when $M < n$.

    By substituting \eqref{eq.rf_error.bias} and \eqref{eq.variance.underparameterized} to \eqref{eq.decomposition.rf_error}, we prove the result.
\end{proof}

\end{document}